\newtheorem{thrm}{Theorem}
\newtheorem{lmm}{Lemma}
\newtheorem{rmrk}{Remark}
\numberwithin{equation}{section}
\journal{}
\begin{document}
	
\begin{frontmatter}
		
\title{Improve Adversarial Robustness via Weight Penalization on Classification Layer}
		
\author[YantaiUniv1]{Cong Xu}
\author[S]{Dan Li}
\author[YantaiUniv2]{Min Yang}
\address[YantaiUniv1]{Department of Mathematics,
Yantai University,
Yantai
Shandong, China
congxueric@gmail.com
}
\address[S]{Research Center for Brain-inspired Intelligence and National Laboratory of Pattern Recognition,
Institute of Automation Chinese Academy of Sciences,
Beijing,China
lidan2017@ia.ac.cn}
\address[YantaiUniv2]{Department of Mathematics,
Yantai University,
Yantai
Shandong, China
yang@ytu.edu.cn
}

\begin{abstract}
It is well-known that deep neural networks are vulnerable to adversarial attacks.
Recent studies show that well-designed classification parts can lead to better robustness.
However,  there is still much space for improvement along this line.
In this paper, we first prove that,  from a geometric point of view,
the robustness of a neural network is equivalent to some angular margin condition of the classifier weights.
We then explain why ReLU type function is not a good choice for activation under this framework.
These findings reveal the limitations of the existing approaches and
lead us to develop a novel light-weight-penalized defensive method, which is simple and has a good scalability.
Empirical results on multiple benchmark datasets  demonstrate that our method can effectively improve the robustness of the network
without requiring too much additional computation, while maintaining a high classification precision for clean data.
\end{abstract}

\begin{keyword}
adversarial robustness,  classification layer, geometric point of view, prediction accuracy, weight penalization
\end{keyword}
		
\end{frontmatter}	

\section{Introduction}
Although deep neural network (DNN) has achieved state-of-the-art performance on many challenging computer vision tasks
\cite{hu2019, huang2019, krizhevsky2009,lecun1998,nair2010,wideresnet2016,zhang2019c},
it was found that they are susceptible  to some crafted adversarial samples \cite{goodfellow2015, szegedy2013}:
very small and often imperceptible perturbations of the data samples are sufficient to fool state-of-the-art classifiers
and result in incorrect classification.

This phenomenon has received particular attention.
Recently, there has been a great deal of interest in developing various defense mechanisms to enhance the robustness of neural networks.
Examples include defensive distillation \cite{carlini2017, papernot2016b},
feature squeezing \cite{xu2018},
obfuscated gradients \cite{guo2018, na2018, samngouei2018},
adversarial training \cite{madry2018, shafahi2019, shaham2018,wang2020},
and several other versions  \cite{guo2019, naveiro2019, pang2018, wan2018}.
However, some powerful defensive methods such as adversarial training  \cite{madry2018, wang2020}
suffer from heavy computational cost and are nearly intractable on large scale problems.
Besides, a large number of them improve robustness at the expense of the accuracy of clean data \cite{su2018, zhang2019b}.
Recent studies \cite{mustafa2019, pang2018, pang2020}  show that well-designed classifiers can effectively learn more structured representations,
thus greatly improving the robustness of neural networks.
However, there is still  much room for improvement along this line.

With this paper, we aim to propose a novel defensive method
by designing a special weight penalization mechanism for the classification part.
We first argue that, from a geometric point of view,
the robustness of a neural network  can be depicted by certain angular margin condition on classifier weights  (see Section 2.2).
Increasing the angular margins between different weight vectors is helpful to  raise the robustness of  the neural network.
Secondly, we point out that,  in terms of classification robustness,
the commonly used ReLU type function is not a good activation choice in a framework like ours.
Based on the above findings,
a special light-weight-regularization term is proposed to the commonly used loss function to advance both the robustness and the accuracy of the models.
Extensive experiments on three datasets show that our method can achieve good robustness under some typical  attacks
while maintaining high accuracy to clean data.

The main contributions of this work are summarized as follows:
\begin{itemize}
\item [(1)]
  We prove that the robustness of the neural networks is equivalent to certain angular margin condition
  between the weight vectors of the classifier.
  We further point out that in a framework like ours,
  a saturated function such as Tanh is a more appropriate activation option than the commonly used  ReLU function.
\item [(2)]
  We design a novel light penalty defense mechanism that only needs to control the weight vectors at the classification part
  to improve the robustness of the neural network.
  The proposed method is simple and could be easily  used in various neural network  architectures.
\item [(3)]
  Empirical  performances on MNIST, CIFAR-10, and CIFAR-100 datasets show that the proposed approach
  has reliable robustness against different adversarial attacks in different networks,
  while maintaining high classification accuracy for clean inputs.
  In most cases, our method outperforms the compared benchmark defense approaches.
\end{itemize}

The rest of the paper is organized as follows.
Section 2 introduces the related works on defensive approaches and attack models.
Our method is  formally  launched and introduced in Section 3.
Experimental results under different threat models, as well as comparisons to other defense methods, are presented in Section 4.
The conclusion is drawn in Section 5.
Details of the theoretical analysis are left in the appendix.

\section{Related Works}

\subsection{ Defense methods}
Previous efforts to improve the robustness of deep learning models are encouraging \cite{athalye2018, guo2018, na2018,madry2018, pang2018,foolbox2017,samngouei2018,wang2018}.
Among them, adversarial training \cite{madry2018} is one of the most effective means of defense.
Adversarial training involves putting the adversary in the training loop,
and perturbing each training sample before it is passed through the model.
One of the main drawbacks of adversarial training  is  its  high computational cost,
each stochastic gradient descent iteration using multiple gradient computations to generate adversarial samples,
which limits its application in large-scale problems.
Although there appears a number of fast procedures \cite{shafahi2019, zhang2019a}, this problem still exists.
Another problem with adversarial training is that the improved robustness comes at the expense of the prediction accuracy of clean inputs \cite{su2018, zhang2019b}.

To save the computational cost and ensure predictive accuracy on clean data,
some recent studies aim to improve the linear classification part of the neural network to produce reliable robustness.
Wan \& Zhong et al. \cite{wan2018} and Wang et al. \cite{wang2018} have designed some regularization items in the classifier loss function
to increase the dispersion between classes of feature representation.
Recently, Pang et al. \cite{pang2018, pang2020} further present some novel  loss functions based on Max Mahalanobis distribution
to learn more structured representations and to induce high-density regions in feature space.
Although our approach  shares some similar spirit as the work of \cite{pang2018, pang2020},
there exist  two critical differences:
\begin{itemize}
\item
In  \cite{pang2018, pang2020}, the weights on the classification layer were explicitly defined to meet the angular margin condition.
This explicit definition may limit the learning ability of the networks,
especially when the dimension of the feature representations are greater than the number of the neurons in the classification layer.
On the contrary, our method implicitly treats the weights restriction as a regular term in the optimization objective.
Such modification makes the networks more flexible and have better learning ability.
\item
ReLU function is a popular activation choice in deep learning neural networks, which was also used in \cite{pang2018, pang2020}.
However, according to our analysis in Section 3.2, we find that ReLU function is not a good  choice in a framework like ours.
Instead,  using saturate activation functions like Tanh  may improve the robustness of the model.
\end{itemize}

\subsection{ Attack models}
A large amount of attack models and algorithms have been introduced in recent years \cite{ carlini2017, goodfellow2015,  moosavi2016, madry2018, foolbox2017, szegedy2013}.
The purpose of these attack models is to look for a small and undetectable perturbation to add to the input, thus generating adversarial examples.
In this article, we mainly consider the following four attack approaches.

\textbf{DeepFool} \cite{moosavi2016} aims to find the minimum adversarial perturbations on both an affine binary classifier and a general binary differentiable classifier.

\textbf{Carlini \& Wagner Attack} (\textbf{C\&W}) \cite{carlini2017} is a widely used attack method which applies
a binary search mechanism on its hyperparameters to find the minimal $ l_2 $ distortion for a successful attack.

\textbf{Projected Gradient Descent} (\textbf{PGD}) \cite{madry2018} starts from a random position in the clean image neighborhood
and applies FGSM \cite{goodfellow2015} for $ m $ iterations with a step size of $ \alpha $.
More specifically, for a given iterative number $ m $ and a small adversarial manipulation budget  $\epsilon $,
\begin{align*}
  &\hat{x}_i = x_{i-1} + \alpha \cdot \mathrm{sign} (\nabla_{x_{i-1}} \mathcal{L}(x_{i-1}, y)),
    \\[5pt]
 & x_{i} = \mathrm{clip} (\hat{x}_i, x - \epsilon, x + \epsilon),
 \quad
  i=1,\ldots, m,
\end{align*}
where  $ \mathcal{L} $  denotes the classification loss  and $ \mathrm{clip} (\cdot) $ projects $\hat{x}_i$ into the $l_{\infty}$ ball around $x$ with radius $\epsilon$.
PGD is one of the most powerful iterative attacks that relies on the first-order information of the target model.

\textbf{Multi-Target Attack} (\textbf{MTA}) \cite{tramer2020} is a recently proposed adaptive attack,
 which directly minimizes the Max Mahalanobis center loss for each target class.

\section{Robust Neural Network via Weight Penalization}
A typical feed forward neural network consists of a nonlinear feature extraction part that transforms the input $x $ to certain feature representation $ f(x) $:
\begin{align*}
x \rightarrow f(x) := \phi(h(x)),
\end{align*}
where $ \phi $ denotes the activation function applied to the last layer of the encoder,
and $ h(x) $ is the feature representation before activation.

With $ f(x) $, we can define the following linear classifier:
\begin{align}
\label{classifier}
c(x) = W f(x) + b,
\end{align}
where $ W $ and $ b $ are the weight matrix and bias vector of the classification layer, respectively.
Then one can use $c(x) $ to predict the category of the input $ x $.

When only the accuracy of classification is considered,
neural networks only need to learn the features that are linearly separable.
However, it was soon realized that these features are not robust under adversarial attack.
Given a clean sample $ x $, an adversarial attack attempts to find a crafted sample $x'$ such that
\begin{align*}
\hat{y}(x') \not=\hat{ y}(x), \quad \textrm{s.t.} \quad \|x' - x\| \le \epsilon,
\end{align*}
where $\hat{y}(\cdot)$ denotes the prediction label from the classifier and $ \| \cdot \| $ is some vector norm.

\subsection{Robustness from a geometric point view}
Inspired by  the analysis in \cite{wang2018},
in this section we aim to relate the robustness of neural networks  to the angular margin between the classifier weights.
The  bias  $ b $ in \eqref{classifier}  is geometrically less informative,
so  we assume that $ b $ is zero for brevity.

\begin{figure}[!htb]
	
	\centering
	\begin{minipage}{0.4\textwidth}
		\centering
		\scalebox{0.42}{\includegraphics{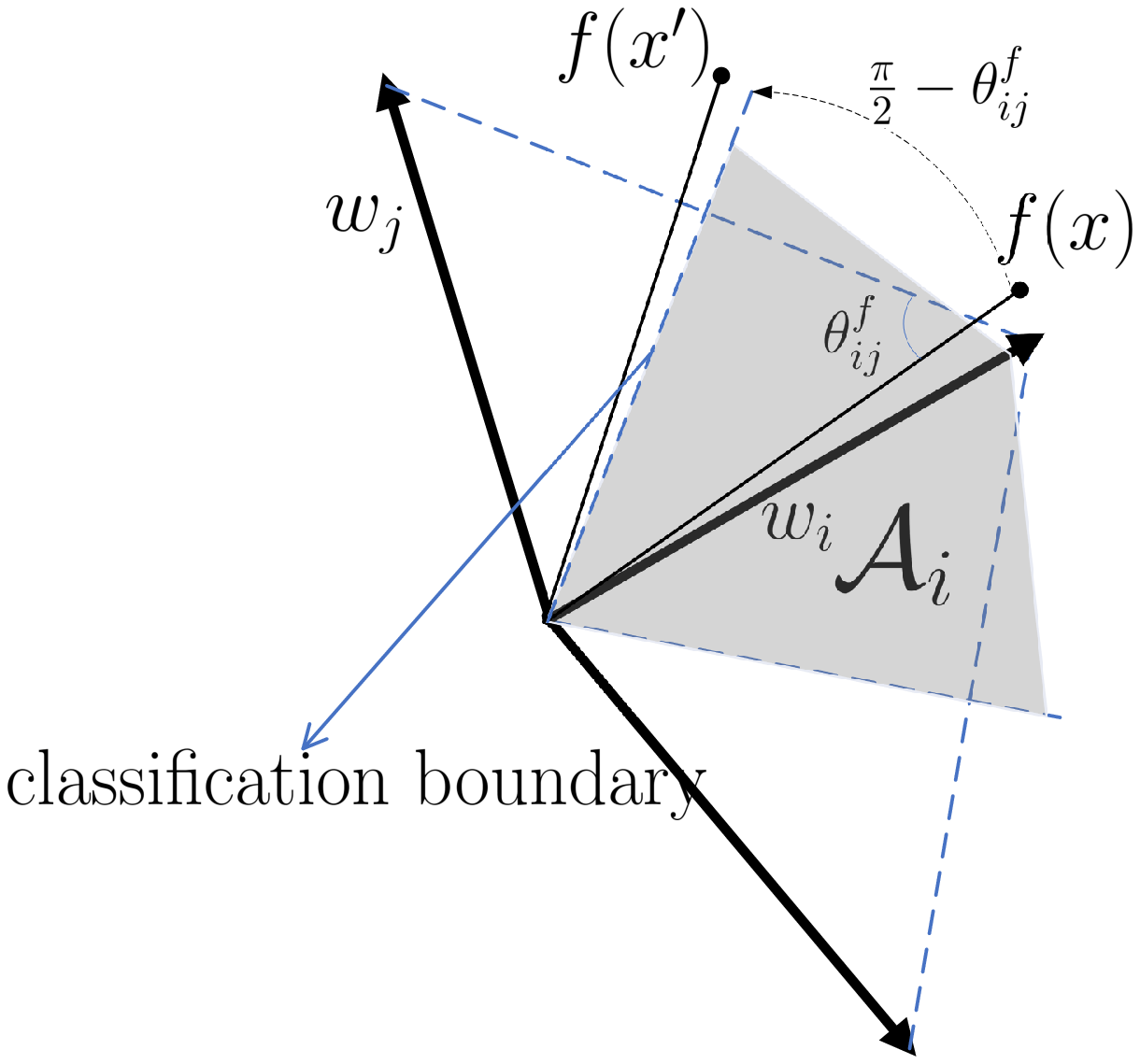}}
		\caption*{(a)}
	\end{minipage}
	\begin{minipage}{0.4\textwidth}
		\centering
		\scalebox{0.36}{\includegraphics{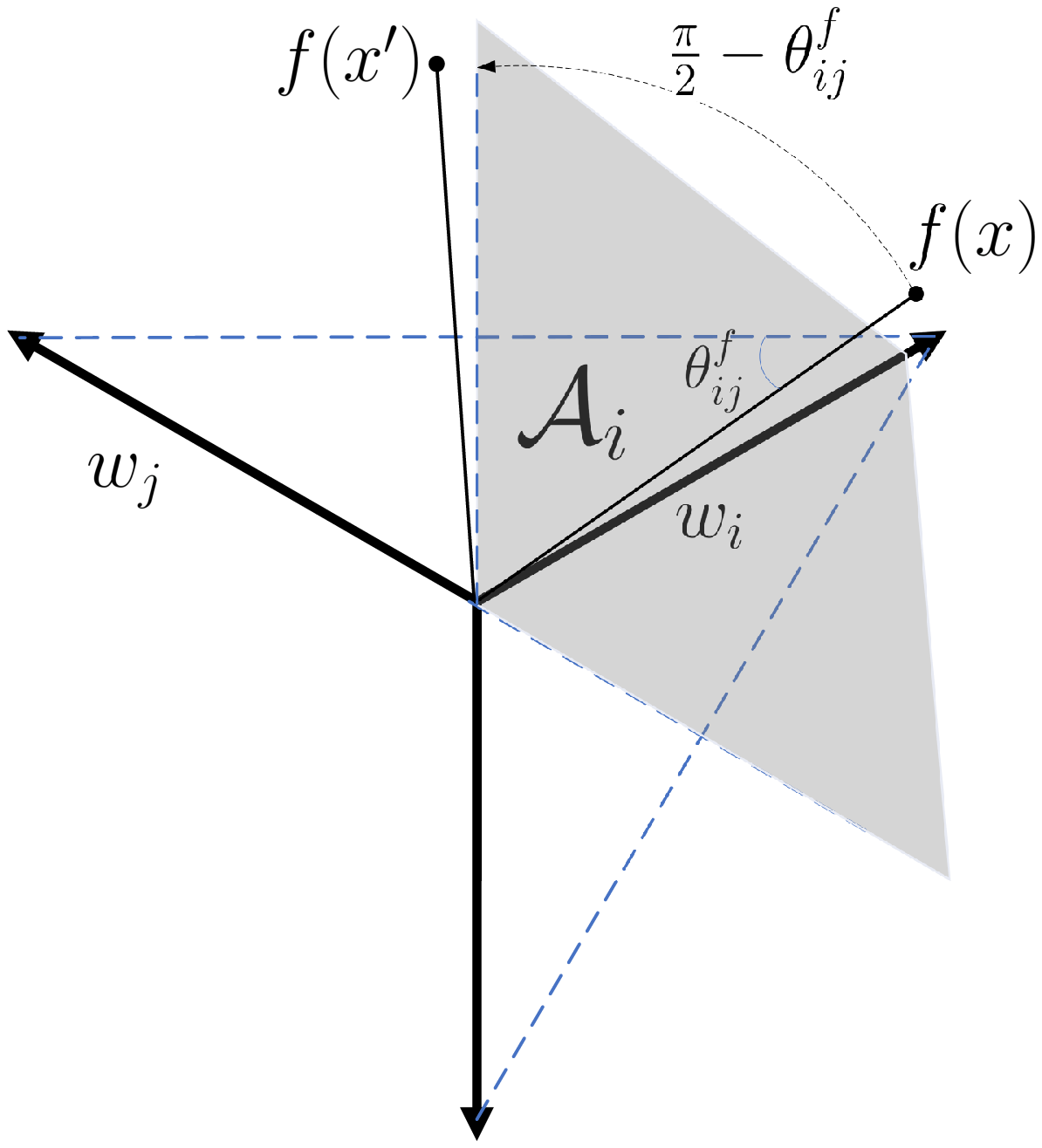}}
		\caption*{(b)}
	\end{minipage}
		\caption{From the geometric point of view, pattern (b) is more robust than pattern (a).}
	\label{geometric}
\end{figure}

Assume that the feature representation $ f(x) \in \mathbb{R}^P$ and weight matrix $ W \in \mathbb{R}^{K \times P} $ ,
where $ P $ is the dimension of the learned features and $ K $ is the number of categories.
Define the classification domains as follows
\begin{align}
\label{region}
\mathcal{A}_i :=\{f \in \mathbb{R}^P: (w_i - w_j)^T f \ge 0, j \not = i \},
\quad  1\leq i \leq K,
\end{align}
where $ w_i $, $ 1\leq i \leq K$,  is the $i $-th  row of $W$.
If the learned feature of an input belongs to domain $ A_i $,
then we predict it to  the class $ i $ .

Suppose that there exists  an  input $ x $  whose feature representation $ f(x) $ belongs to $ \mathcal{A}_i $  (see e.g. Figure \ref{geometric}).
Note that
\begin{align}
\label{boundary}
(w_i - w_j)^T f(x) = \|w_i - w_j\|_2 \|f(x)\|_2 \cos \theta_{ij}^{f},
\quad
j \neq i,
\end{align}
where $\theta_{ij}^{f}$ denotes the angle between $ f(x) $ and $w_i - w_j$.
From \eqref{region} and \eqref{boundary},
to make  $ f(x') $  belong to $\mathcal{A}_j$, where  $ \|x' - x\| \le \epsilon $,
a successful attacker needs to rotate $ f(x) $  at least   $\frac{\pi}{2} - \theta_{ij}^{f}$ degrees.
As a consequence,
for a fixed perturbation amplitude,
the smaller the angle $\theta_{ij}$, the harder the attack.
In other words, the smaller
\begin{align}
\label{theta}
   \max_{1 \leq i,j \leq K \atop j \neq i} \{\theta_{ij}^{f}\},
\end{align}
the more robust the neural network will be.

Further, for a well-trained robust neural networks,
thanks to  the universal approximation property \cite{kidger2019, pinkus1999},
we can expect that all data features are concentrated in the middle of the classification areas.
Therefore, we have $ \theta_{ij}^{f} \approx \frac{\pi}{2}-\frac{\angle (w_i, w_j)}{2} $.
Combining this approximation with \eqref{theta}, we  deduce that  the larger
\begin{align}
\label{ww}
   \min_{1 \leq i,j \leq K \atop j \neq i}\frac{ \angle (w_i, w_j)}{2}
\end{align}
the more robust the neural network will be.
Nevertheless, this conclusion is not  applicable  directly  to optimization.
In the following discussion, we will present some equivalent theoretical results
that  can be directly applied to the optimization objective.

\begin{thrm}\label{margin}
 Let  $ W \in \mathbb{R}^{K \times P } $ be  the weight matrix of the classification layer,
 where $ K $ denotes the number of categories and $ P $ is the dimension of the learned features.
 Assume that $ 1<K \leq P+1 $.
 For simplicity, we  assume that all column vectors of $ W $ have the same length $ s $.
 Then \eqref{ww} achieves the maximum if and only if  $  W $ satisfies
	\begin{align}
   \label{innercondition}
	w_i^T w_j = \frac{s^2}{1-K}, \quad \forall 1\leq i,j \leq K, \; j \neq i.
	\end{align}
\end{thrm}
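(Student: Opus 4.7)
The plan is to reduce the angular maximization in \eqref{ww} to an inner-product minimization, then apply a standard averaging argument based on $\bigl\|\sum_i w_i\bigr\|^2 \ge 0$.

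First, since all $w_i$ share the common length $s$, we have $\cos\angle(w_i,w_j) = w_i^T w_j / s^2$, and cosine is monotone decreasing on $[0,\pi]$. Hence maximizing $\min_{i\neq j}\angle(w_i,w_j)/2$ is equivalent to minimizing the quantity $M(W) := \max_{i\neq j} w_i^T w_j$. The bulk of the proof will be devoted to showing
\begin{align*}
M(W)\;\ge\;\frac{s^2}{1-K},
\end{align*}
with equality characterized exactly by \eqref{innercondition}.

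For the lower bound, I would expand
\begin{align*}
0 \;\le\; \Bigl\|\sum_{i=1}^K w_i\Bigr\|_2^2 \;=\; \sum_{i=1}^K \|w_i\|_2^2 + \sum_{i\neq j} w_i^T w_j \;=\; K s^2 + \sum_{i\neq j} w_i^T w_j,
\end{align*}
which yields $\sum_{i\neq j} w_i^T w_j \ge -K s^2$. Averaging over the $K(K-1)$ ordered off-diagonal pairs gives
\begin{align*}
M(W) \;\ge\; \frac{1}{K(K-1)}\sum_{i\neq j} w_i^T w_j \;\ge\; \frac{-Ks^2}{K(K-1)} \;=\; \frac{s^2}{1-K}.
\end{align*}
For the ``only if'' direction, equality in the averaging step forces all off-diagonal inner products to coincide with their common value $M(W)$, while equality in the first inequality forces $\sum_i w_i = 0$; together these two conditions force each $w_i^T w_j$ to equal $s^2/(1-K)$. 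Conversely, if \eqref{innercondition} holds, a direct computation gives $\bigl\|\sum_i w_i\bigr\|_2^2 = Ks^2 + K(K-1)\cdot\tfrac{s^2}{1-K} = 0$, so the bound is attained.

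What remains is to confirm that the bound is actually achievable, so that the optimum is not vacuous; this is the step where I expect to use the hypothesis $K \le P+1$. The standard construction is to place $w_1,\dots,w_K$ at the vertices of a regular simplex centered at the origin and rescaled to radius $s$, which embeds into $\mathbb{R}^P$ precisely when $K\le P+1$. A brief Gram-matrix check (the matrix $G_{ij}=s^2$ on the diagonal and $s^2/(1-K)$ off-diagonal is positive semidefinite of rank $K-1 \le P$) confirms realisability. The potentially delicate point in the argument is keeping the ``iff'' characterization tight, since equality in the averaging bound alone only says all $w_i^T w_j$ are equal---one must invoke the $\sum_i w_i = 0$ condition to pin down the common value; I would make this coupling explicit when writing out the proof.
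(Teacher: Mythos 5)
Your proof is correct, and it takes a genuinely different route from the paper's at the decisive step. Both arguments begin with the same reduction: since all $w_i$ have length $s$ and $\arccos$ is decreasing, maximizing $\min_{i\neq j}\angle(w_i,w_j)$ is the same as minimizing $M(W)=\max_{i\neq j}w_i^Tw_j$. From there the paper proceeds in two stages: it first shows by a perturbation-and-contradiction argument (replacing $\tilde w_1$ by a normalized convex combination $\tilde w(t^*)$ and iterating "step by step" over the rows) that any optimizer must have all off-diagonal inner products equal, and then invokes a separate lemma computing the eigenvalues of the resulting Gram matrix $\Sigma$ (via a closed-form determinant) to conclude that the common value is at least $s^2/(1-K)$, with a rank-$(K-1)$ factorization $\Sigma=VV^T$ supplying achievability under $K\le P+1$. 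You instead obtain the lower bound $M(W)\ge s^2/(1-K)$ in one stroke from $0\le\bigl\|\sum_i w_i\bigr\|_2^2=Ks^2+\sum_{i\neq j}w_i^Tw_j$ plus averaging, and the equality analysis of that two-step chain immediately delivers the "only if" direction: once achievability shows the optimum value equals the bound, the max-equals-average condition already pins every $w_i^Tw_j$ to the common value $s^2/(1-K)$ (your extra appeal to $\sum_i w_i=0$ is harmless but not strictly needed there). Your route is shorter and more elementary, and it sidesteps the somewhat delicate iterative construction of the $\hat w_i$ in the paper, which is the least rigorous part of the published argument; the only place you still need linear algebra is the positive-semidefinite rank-$(K-1)$ check for realizability, which is exactly where the hypothesis $K\le P+1$ enters in both proofs. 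A pleasant side effect of your argument is that it derives $\sum_i w_i=\bm{0}$ as a consequence of optimality rather than assuming it, which matches the paper's remark distinguishing its result from the earlier literature.
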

\begin{proof}
	See Appendix.
\end{proof}

\begin{rmrk}
    The assumption  that $ 1<K \leq P+1 $ almost always holds true in computational vision tasks.
    Note that there is a similar theoretical result in the literature \cite{pang2018}.
    But an additional  assumption $\sum_{i=1}^K w_i=\bm{0}$ is needed there.
\end{rmrk}

\begin{rmrk}
According to Theorem \ref{margin}, to obtain a robust neural network,
the weight vectors of the classification layer must be uniformly distributed in the latent feature space (see e.g. Figure \ref{geometric} (b)).
\end{rmrk}

It is trivial that the condition \eqref{innercondition}  has an equivalent matrix form
\begin{align}
\label{matrixform}
  W  W ^T =\Sigma
\end{align}
where the entries of the matrix $ \Sigma $ satisfies
\begin{align*}
\Sigma_{ij} =
\left \{
\begin{array}{ll}
s^2, & i=j \\
\frac{s^2}{1-K}, & i \not = j
\end{array} \right..
\end{align*}
Now we  impose the constraint \eqref{matrixform} into the loss function and
obtain the following complete optimization objective:
\begin{align}
\label{weightloss}
\mathcal{L} := \mathcal{L}_{SCE} + \alpha  \|W  W^T - \Sigma\|_2,
\end{align}
where $\mathcal{L}_{SCE}$ represents the commonly used softmax cross-entropy loss,
$  \alpha $ and $ s $ are hyperparameters.

It is obvious that the proposed method  has a clear geometric background and is easy to implement,
only an additional weight penalization on the classification layer requiring to be considered.

\subsection{ReLU type function is not a good choice for activation}
Non-saturated functions such as ReLU \cite{nair2010} and PReLU \cite{he2015} are commonly used in activation units
due to their simpleness and effectiveness.
Nonetheless,  if these activation functions are used in a framework like ours, two problems may arise.

First, consider the example shown in Figure \ref{activation},
where the category number $ K=3 $ and the feature dimension $ P=2 $.
Although the weight vectors have been evenly distributed in the feature space,
we find that $w_3 $ belongs to the third quadrant,
and therefore both of its components are negative.
Notice that the value of the ReLU activation function is always positive,
which leads to the fact $ w_3^T f(x) \le 0$ regardless of the label of the input.
According to  \eqref{region}, it implies that all inputs belong to the class 3 will be misclassified.
Therefore, the use of ReLU type activation function might deteriorate the classification accuracy of the model.
Such case is very likely to happen if  the weights are fixed in advance as done in \cite{pang2018}.
Instead, if  we adopt an activation function  like Tanh, which belongs to $ (-1,1) $,
 the above phenomena can be avoided.

\begin{figure}[!htb]
	
	\centering
 	\scalebox{0.35}{\includegraphics{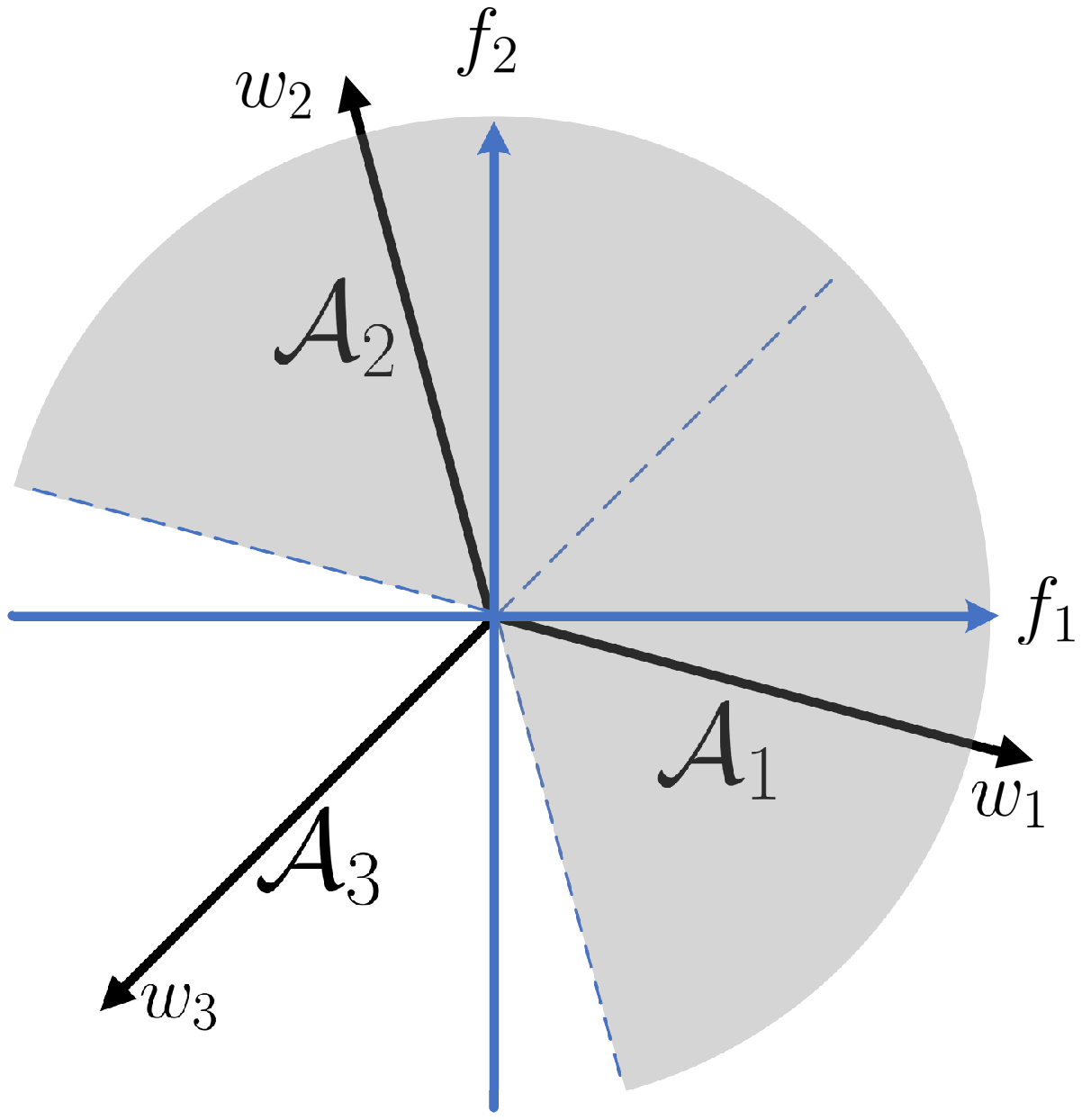}}
 	\caption{An example using ReLU activation function fails to give the correct classification.}
	      \label{activation}
\end{figure}

Secondly, suppose that a neural network has already learned linearly separable features after several epoches.
Given a training sample $ (x,y) $, where $y \in \{1,2,\ldots,K\} $,
it follows from \eqref{region} that $ (w_y- w_j)^T f(x) >0$ , $ \forall j \not =y $.
Note that
\begin{align*}
      \mathcal{L}_{SCE}(x, y)
& =-\log \frac{\exp(w_y^T f(x))}{\sum_{j=1}^K \exp(w_j^T f(x))} \\
& =-\log \frac{1}{\sum_{j=1}^K \exp(- f(x)^T(w_y-w_j))}.
\end{align*}
Then,
\begin{align*}
\mathop{\lim} \limits_{0<\beta \rightarrow +\infty} -\log \frac{1}{\sum_{j=1}^K \exp(-\beta f(x)^T(w_y-w_j) )}
   = 0.
\end{align*}
Since the value of ReLU type function belongs to $[0, +\infty) $,
we can infer from the above two equations that if  a ReLU type activation is used,
after the neural network is trained to a certain extent,
it does not need to  pay attention to the second penalty term in \eqref{weightloss},
but only needs to simply  prolong the  length of feature vector,  to reduce the total loss.
However, according to the analysis in Section 3.1,
prolonging the  length of features is not helpful to improve the robustness of the neural network,
because the cost of rotating a feature vector from one region to another is unchanged.
On the contrary, if a saturated activation like Tanh is used,
the length of  feature vector is  bounded in the interval $ (0,\sqrt{P}) $.
So, in order to reduce the total  training loss,
the neural network must take care of  the penalty term in \eqref{weightloss} and thus could learn more robust features.

\begin{figure}[!htb]
	
	\centering
	\begin{minipage}{0.3\textwidth}
		\centering
		\scalebox{0.7}{\includegraphics{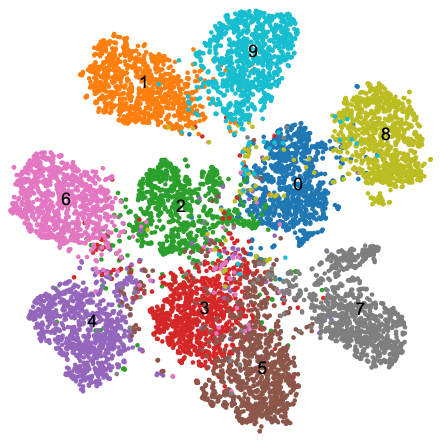}}
		\caption*{(a) Ordinary softmax loss}
	\end{minipage}
	\begin{minipage}{0.3\textwidth}
		\centering
		\scalebox{0.7}{\includegraphics{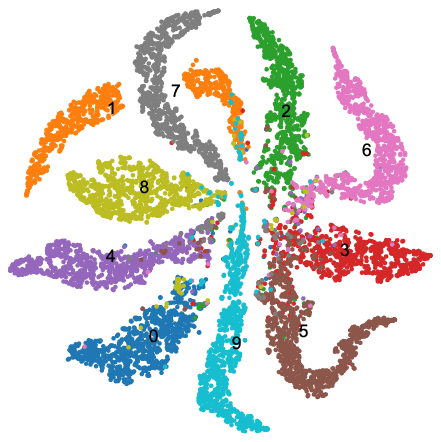}}
		\caption*{(b) Weight penalty +ReLU }
	\end{minipage}
	\begin{minipage}{0.3\textwidth}
		\centering
		\scalebox{0.7}{\includegraphics{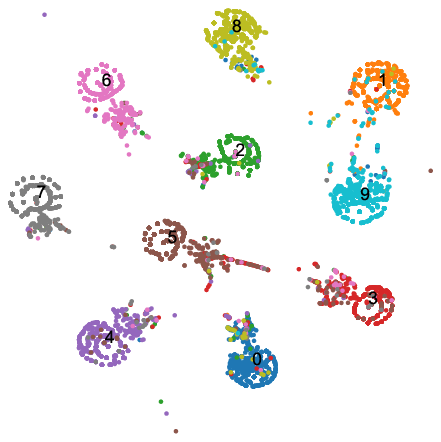}}
		\caption*{(c) Weight penalty+Tanh}
	\end{minipage}
	\caption{t-SNE \cite{maaten2008} visualization of latent features on the test data of CIFAR-10, where the index numbers indicate classes 0 to 9.}
	\label{features}
\end{figure}

According to the analysis above,
in this paper,  we  choose Tanh as the activation function in the last layer of the encoder part, i.e.,
\begin{align*}
f(x) = \mathrm{Tanh} (h(x)).
\end{align*}
We depict the features exacted based on ReLU and Tanh activations  in Figure \ref{features}, respectively.
Consistent with our analysis,  the features exacted based on Tanh activation are more compact
and  far from the decision boundary.
Therefore, an adversarial attack which only makes small changes to the input is less likely to succeed.

\section{Experiments}
In this section, we will test the robustness and classification accuracy of the proposed method
over different networks and different data sets.

The attack methods introduced in Section 2.2, including PGD \cite{madry2018}, DeepFool \cite{moosavi2016},
C\&W \cite{carlini2017} and the multi-target attack (MTA) \cite{ilyas2019},
are applied in our experiments.
Among them, PGD, DeepFool, MTA are  iterative-based methods,
which usually iterate less than hundreds of rounds to craft an adversarial example.
On the other hand, C\&W is an optimization-based method,
which require a much more computation compared to the iterative-based methods,
but usually has higher success rates on attacking classifiers.

According to the suggestions of the literature,
Table \ref{tableA} lists the hyperparameters of these attack methods, including the number of iterations, step size and overshoot.
 We shall use PGD20 to denote a PGD attack with 20 iteration steps.
In the following text, similar symbols indicate similar meanings.

\begin{table}[!htb]
	\center
	\caption{The basic setup for four adversarial attacks}
	\label{tableA}
	\scalebox{0.8}{
		\begin{tabular}{|c|c|c|c|c|c|c|c|c|}
			\hline
			&   \multicolumn{4}{|c|}{PGD}        & C\&W  & \multicolumn{2}{|c|}{MTA}  & DeepFool \\
			\hline
			\hline
			number of iterations & 20 & 40 & 100 & 200  &  1000 & 100 & 200 &  50 \\
			\hline
			step size & 0.0031 & 0.01 & 0.0031 & 0.01 & 0.01 & 0.0031 & 0.01 &0.02(overshoot) \\		
			\hline
	\end{tabular}}
\end{table}

We consider three popular benchmark datasets,
including MNIST \cite{lecun1998}, CIFAR-10 and CIFAR-100 \cite{krizhevsky2009}.
Following the suggestions in \cite{carlini2019},
for PGD, DeepFool and MTA adversarial attacks,
we consider $l_{\infty}$ adversarial perturbations and set the perturbation amplitude $\epsilon=0.3$ for MNIST  and $\epsilon=8/255$ for CIFAR-10 and CIFAR-100.
For  C\&W attack,
we consider $l_2 $ perturbations and set $\epsilon=3 $ for MNIST
and $\epsilon=1 $ for CIFAR-10 and CIFAR-100.

We compare our method with four  baseline defensive algorithms,
including AdvTraining \cite{madry2018}, TRADES \cite{zhang2019b}, MART \cite{wang2020} and MMC \cite{pang2020}.
Among them, AdvTraining is regarded as one of the most effective defenses.
TRADES is a defensive method which keeps a good tradeoff between the adversarial robustness  and the prediction accuracy.
MART is based on AdvTraining but explicitly differentiates the misclassified and correctly classified examples during the training.
MMC is an improved version of the defense approach proposed in \cite{pang2018} and has a similar starting point to our approach.

If it is not stated otherwise, the hyperparameters in \eqref{weightloss} are fixed as $ \alpha=100 $ and $ s=0.1 $,
and hyperparameters of all compared methods are selected according to the original literature.

\subsection{Performance on large-scale networks}
In this section, we first compare our method with  the baseline defensive algorithms on CIFAR-10 dataset using large-scale networks.
All clean images are normalized and transformed by 4-pixel padding with $32 \times 32$ random crop and horizontal flip.

For our method, we use the same neural network architecture as  \cite{madry2018, zhang2019b},
i.e., the wide residual network WRN-34-10 \cite{wideresnet2016}.
We apply the Adam \cite{kingma2014} optimizer with an initial learning rate of 0.01 which decays with a factor of 0.9  for each 60 training epoches,
and run 400 epochs on the training dataset.
The experimental results of the compared approaches are directly cited from the literature.

\begin{table}[!htp]
	\center
	\caption{Classification accuracy on the clean and the adversarial examples on CIFAR-10.}
	\label{tableC}
	\scalebox{0.9}{
		\begin{tabular}{|c||c|c|c|c|}
\hline
Methods	                                &	Networks	&	Attacks	&	Clean accuracy	&	Robust accuracy	\\
\hline
AdvTraining \cite{madry2018} &	WRN-34-10	&	PGD20  	&	87.30\%	&	47.04\%	\\
MART \cite{wang2020}	            &	WRN-34-10	&	PGD20	    &	84.17\%	&	58.56\%	\\
MMC	 \cite{pang2020}              &	ResNet-32	    &	PGD10      &	92.70\%	&	36.00\%	\\
MMC  \cite{pang2020}    	        &	ResNet-32	    &	PGD50	    &	92.70\%	&	24.80\%	\\
TRADES \cite{zhang2019b}	    &	WRN-34-10	&	PGD20   	&	84.92\%	&	56.61\%	\\
TRADES \cite{zhang2019b}	    &	WRN-34-10	&	DeepFool	&	84.92\%	&	61.38\%	\\
TRADES \cite{zhang2019b}	    &	WRN-34-10	&	C\&W    	&	84.92\%	&	81.24\%	\\
\hline
Ours	            &	WRN-34-10	&	PGD20	    &	94.10\%	&	78.40\%	\\
Ours         	&	WRN-34-10	&	DeepFool	&	94.10\%	&	79.92\%	\\
Ours	            &	WRN-34-10	&	C\&W	    &	94.10\%	&	83.42\%	\\
			\hline
	\end{tabular}}
\end{table}
According to the results in Table \ref{tableC},
 we can see that the proposed defense method can significantly improve the robustness of the model,
 while still maintaining a high prediction accuracy for clean data.
 It outperforms the compared methods in both accuracy and robustness.

Next, we consider the CIFAR-100 dataset.
As we know,
CIFAR-100 is  a more complex dataset than CIFAR-10.
In Table \ref{tableD},
we report the prediction accuracy and robustness performance of our method on CIFA-100 dataset under various attacks.
Obviously, our method is still effective in this complex dataset.
\begin{table}[!htp]
	\center
	\caption{Classification accuracy of our method on CIFAR-100 dataset.}
	\label{tableD}
	\scalebox{0.9}{
	\begin{tabular}{|c|c|c|c|c|c|}
\hline
\multicolumn{6}{|c|}{CIFAR-100 } \\
\hline
\hline
Methods	&	Clean	&	PGD20	&	PGD100	&	DeepFool & C\&W 	\\		
			\hline
Ours     	&	73.84\%	&	47.17\% & 46.47\% & 55.21\% & 54.82\%	\\
			\hline
	\end{tabular}}
\end{table}

\subsection{Performance on small networks}
Most previous defensive algorithms were experimented with large-scale neural networks like ResNet \cite{he2016}.
However, under some resource constrained situations,
we have to turn to small networks due to their compute and power efficiency.
In this section, we are interested in the performance of the new method on small networks.

Specially, we consider a small network  \cite{carlini2017, papernot2016b} which only includes four convolutional layers.
We compare our method with AdvTraining  \cite{madry2018} and MMC \cite{pang2020} in this small network.
We apply the Adam \cite{kingma2014} optimizer with the learning rate of 0.01, and run for 400 epochs on the training samples.

\begin{table}[!htb]
	\center
	\caption{Classification accuracy on the clean and the adversarial examples on MNIST using a small network.}
	\label{tableMnist}
	\scalebox{0.9}{
		\begin{tabular}{|c|c|c|c|c|c|c|}
\hline
\multicolumn{7}{|c|}{MNIST } \\
\hline
\hline
Methods      &	Clean	&	PGD40	&	PGD200	&	DeepFool	&	C\&W	&		MTA200	\\
\hline
AdvTraining	&	\textbf{99.47\%}	&	\textbf{96.38\%}	&	\textbf{94.58\%}	&	96.79\%	&	39.82\%	&	\textbf{98.57\%}	\\
MMC	            &	89.55\%	            &	35.98\%	&	34.91\%	&	34.12\%	&	2.08\%	&		12.40\%	\\
Ours	            &	99.39\%	            &	81.90\%	&	80.39\%	&	\textbf{97.28\%}	&	\textbf{97.95\%}	&	88.31\%	\\
\hline
	\end{tabular}}
\end{table}

\begin{table}[!htb]
	\center
	\caption{Classification accuracy on the clean and the adversarial examples on CIFAR-10 using a small network.}
	\label{tableCifar}
	\scalebox{0.9}{
		\begin{tabular}{|c|c|c|c|c|c|c|}
\hline
\multicolumn{7}{|c|}{CIFAR-10  } \\
\hline
\hline
Methods &	Clean	&	PGD20	&	PGD100	&	DeepFool	&	C\&W	&		MTA100	\\
\hline
AdvTraining	&	68.81\%	&	21.36\%	&	18.94\%	&	24.68\%	&	10.91\%	&	57.15\%	\\
MMC	&	73.30\%	&	39.23\%	&	38.96\%	&	16.86\%	&	1.49\%	&	25.96\%	\\
Ours	&	\textbf{81.43\%}	&	\textbf{57.11\%}	&	\textbf{56.22\%}	&	\textbf{59.11\%}	&	\textbf{59.11\%}	&	\textbf{64.88\%}	\\
\hline
	\end{tabular}}
\end{table}

It is observed  from Tables \ref{tableMnist} and \ref{tableCifar}  that AdvTraining  has a relative better performance on simple MNIST dataset.
However,  on more complex CIFAR-10 dataset, the performance of AdvTraining declined significantly.
On the contrary, the performance of the other two methods is more consistent on both datasets.
We may infer from the above phenomenon that AdvTraining method is more dependent on the representative ability of the networks,
and a small network seems not sufficient to support it to learn useful features on complex datasets.
Moreover, it is worth to notice that MMC fails against C\&W1000 and MTA100 attacks,
while our method still exhibits excellent robustness.

\begin{figure}[!thb]
	\centering
	\scalebox{0.8}{\includegraphics{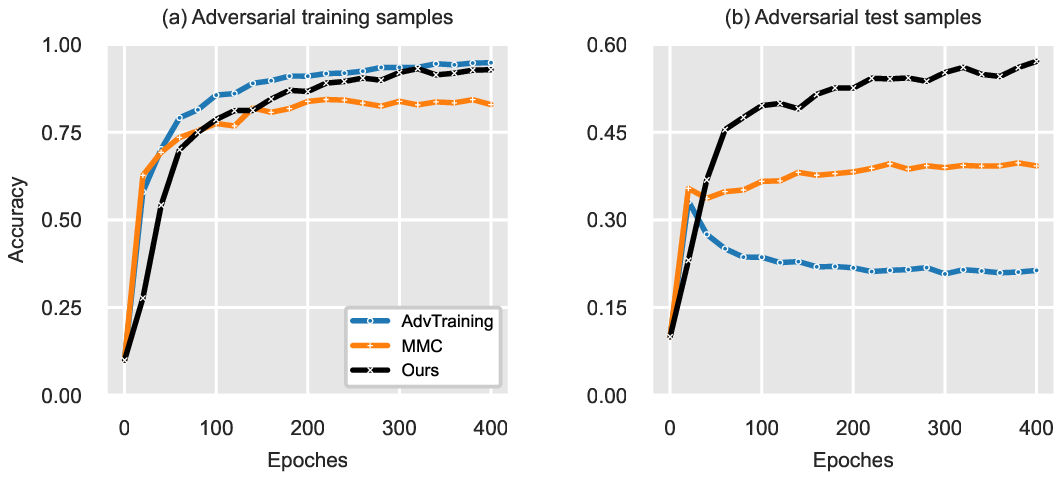}}
	\caption{Training and test accuracy of AT, MMC and our method with increasing epoches on CIFAR-10 dataset. The robustness is evaluated under PGD20 attack.}
	\label{small_cifar10}
\end{figure}

Next, we depict the  training curves of three defensive methods under PGD20 attack on CIFAR10 dataset.
As shown in Figure \ref{small_cifar10},
for AdvTraining,
the test accuracy under adversarial attacks has  a clear decline after few epoches,
while the curves of the other two methods keep stable.
This is an obvious over-fitting phenomenon,
which indicates that when using a small networks,
AdvTraining seems to just memorize the adversarial samples, rather than learning the robust features.

\subsection{Ablation study}

\begin{figure}[!thb]
	\centering
	\scalebox{0.65}{\includegraphics{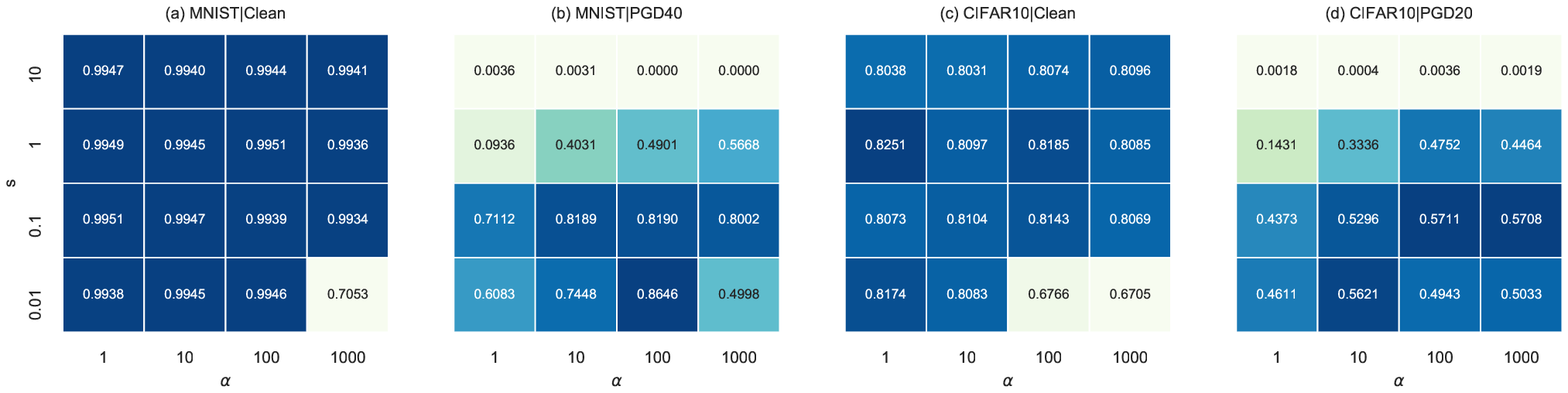}}
	\caption{ Classification accuracy on clean and adversarial samples with various $\alpha $ and $s $.}
	\label{ablation}
\end{figure}

In this section, we  aim to test the performance of our method under various configurations.
We apply PGD40 attack on MNIST dataset and PGD20 attack on CIFAR-10 dataset.

We first investigate the effects of hyperparameters $ \alpha  $ and $ s$ of \eqref{weightloss}.
As shown in Figure \ref{ablation},  different hyperparameters have little impact on classification accuracy on clean data,
while for adversarial examples, the selection of hyperparameters has a great impact on the accuracy.

\begin{figure}[!thb]
	\centering
	\scalebox{0.65}{\includegraphics{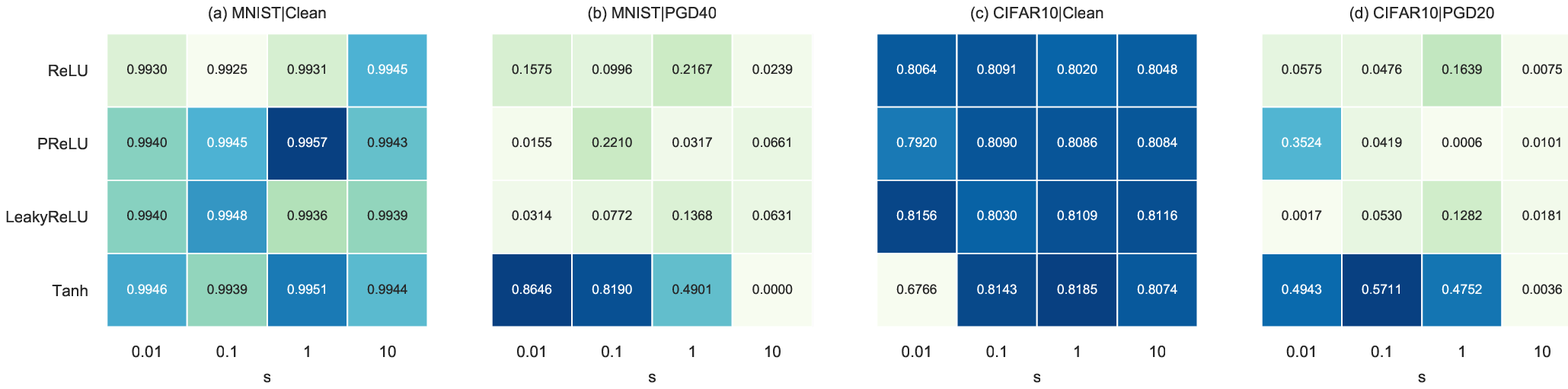}}
	\caption{Classification accuracy on clean and adversarial samples with various activation choices and $ s $ ($ \alpha = 100 $).}
	\label{ablation_act}
\end{figure}

\begin{figure}[!thb]
	\centering
	\scalebox{0.68}{\includegraphics{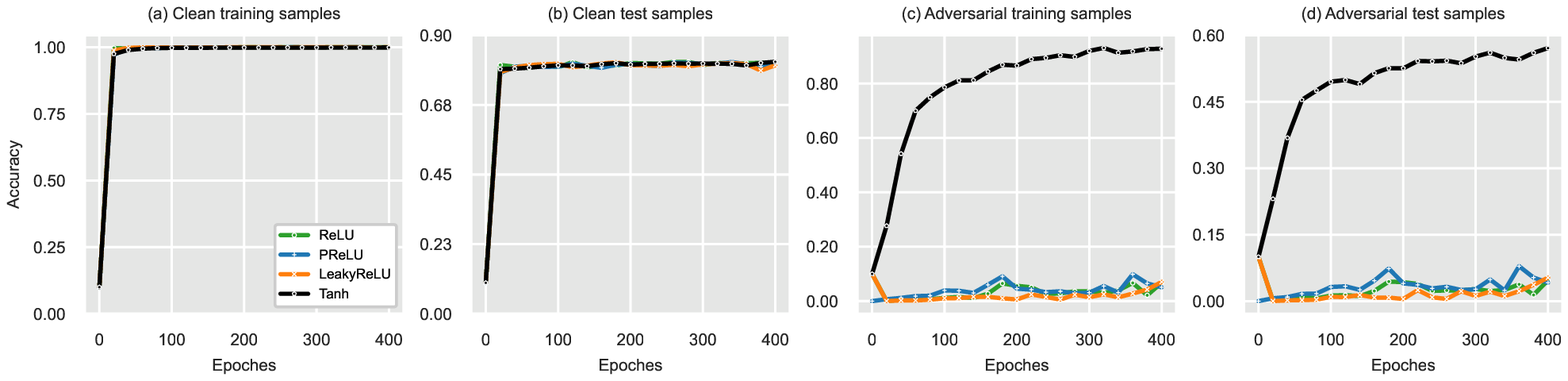}}
	\caption{Comparison of training and test accuracy using different activation functions on  CIFAR-10 dataset ($ \alpha = 100 $ and $ s=0.1 $).}
	\label{ablation_cifar10}
\end{figure}

Next, to validate our findings in Section 3.2,
we examine the performance of our method using  different activation functions,
including ReLU, PReLU, LeakyReLU and Tanh.
We fix $\alpha=100$ and allow $ s $ to be variable.
As reflected in Figures \ref{ablation_act} and  \ref{ablation_cifar10},
on clean data,  there is almost no difference in the prediction accuracy for different activation choices.
But, for adversarial examples,  using Tanh in the activation unit achieves best performance.

\section{Concluding Remarks}
It is a meaningful direction to improve the robustness of neural network
by improving the classification part of the network \cite{mustafa2019, pang2018, pang2020, wang2018}.
This direction could overcome some shortcomings of commonly used adversarial training approaches \cite{madry2018, shafahi2019, wang2020},
such as heavy computation load, and robustness derived from the sacrifice of accuracy on clean data.

Along this line, in this paper,
we first proved that the robustness of neural network is equivalent to certain margin condition between the classifier weights.
Further, from two different perspectives,
we analyzed why non-saturated function such as Tanh is a better activation choice than commonly used ReLU type function under the framework like ours.
Based on these findings, we developed a novel light-weight-penalized defensive method,
which has a clear geometric background  and can be easily used in various existing neural network architectures.
The experimental results clearly showed that, in most cases,
the proposed method outperformed the baseline defenses in terms of  both robustness and prediction accuracy.

There still leave some problems to be further investigated.
In Section 3.2, we analyze the relationship between activation choice and robustness intuitively,
while rigorous theoretical analysis deserves further study.
Moreover, the paper focuses on the improvement  of the classification part,
and whether it is possible to apply the developed ideas in improving the nonlinear encoder part is also a promising problem.

\newpage

\appendix
\renewcommand{\appendixname}{Appendix}

\section{}
\begin{lmm}\label{lmm1}
Let $M \in \mathbb{R}^{n \times n}$ be a symmetric matrix  such that $ M_{i,i}=a $ and $ M_{i,j}=b $,
$ 1\leq i,j \leq n $, $ j\neq i $.
Then its determinant satisfies
\begin{align}
\label{a1}
\det (M) = (a + bn -b) (a - b)^{n-1}.
\end{align}	
\end{lmm}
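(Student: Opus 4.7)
\textbf{Proof proposal for Lemma \ref{lmm1}.} The plan is to decompose $M$ into a multiple of the identity plus a rank-one piece and then read off its eigenvalues. Concretely, let $J \in \mathbb{R}^{n \times n}$ denote the all-ones matrix. Then I would write
\begin{equation*}
M = (a-b)\, I_n + b\, J,
\end{equation*}
which is a direct consequence of the definition of $M$: the diagonal entries evaluate to $(a-b) + b = a$ and the off-diagonal entries evaluate to $0 + b = b$.

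Next, I would analyze the spectrum of $J$. Since $J$ has rank one, its only nonzero eigenvalue is $n$, with eigenvector $\mathbf{1} = (1, \ldots, 1)^T$; the kernel of $J$ has dimension $n-1$, giving the eigenvalue $0$ with multiplicity $n-1$. Consequently, the eigenvalues of $M = (a-b)I + bJ$ are obtained by shifting: the vector $\mathbf{1}$ yields the eigenvalue $(a-b) + bn = a + b(n-1)$, while every vector orthogonal to $\mathbf{1}$ gives the eigenvalue $(a-b) + 0 = a-b$ (with multiplicity $n-1$).

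Finally, since the determinant equals the product of the eigenvalues counted with multiplicity, I would conclude
\begin{equation*}
\det(M) = \bigl(a + b(n-1)\bigr)\, (a-b)^{n-1} = (a + bn - b)\,(a-b)^{n-1},
\end{equation*}
which is exactly \eqref{a1}. The argument is essentially mechanical; the only step worth being careful about is the eigenvalue count for $J$, but this is a classical fact and presents no real obstacle. An alternative route would be to perform the row/column reduction of subtracting the first row from every other row and then adding columns $2, \ldots, n$ to the first column, which triangularizes $M$ and gives the same factorization, but the eigenvalue approach is cleaner.
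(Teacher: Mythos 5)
Your proof is correct and complete, but it takes a different route from the paper: the paper merely asserts that \eqref{a1} ``could be obtained by applying an inductive argument to the dimension $n$'' and gives no details, whereas you decompose $M = (a-b)I_n + bJ$ and read off the spectrum of the rank-one perturbation. Your spectral argument is arguably the better one here: it is fully worked out, it identifies the eigenvalues $a+b(n-1)$ (multiplicity $1$) and $a-b$ (multiplicity $n-1$) explicitly, and these eigenvalues are exactly what Lemma 2 of the paper needs when it computes $\det(\lambda I - \Sigma)$ and extracts the positive-semidefiniteness constraint $\lambda \ge 0$ --- so your approach actually makes the downstream application more transparent than the determinant formula alone. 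The inductive row-expansion route the paper gestures at (or the row/column reduction you mention as an alternative) is more elementary in that it avoids spectral theory, but it yields only the determinant, not the eigenvalue decomposition. No gaps: the eigenvalue count for $J$ is the only point requiring care and you handle it correctly.
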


\begin{proof}
The desired result (\ref{a1}) could be obtained  by applying an inductive argument to the dimension $ n $.
\end{proof}

\begin{lmm}
\label{lmm2}
For $W \in \mathbb{R}^{K \times P}$,
denote by $w_i $ the $ i $-th row of the matrix.
Suppose that  $ 1<K \leq P + 1 $ and
\begin{align*}
  w_i^T w_j =
                 \left \{
                         \begin{array}{ll}
                                   1,      & i=j \\
                                   \rho,  &  j \neq i
                         \end{array}
                 \right.,
\end{align*}
where $\rho $ is some constant. Then the following relation
\begin{align}
\label{R1}
	\rho \geq \frac{1}{1-K}
\end{align}
holds true. Moreover, there exists some $ W $ that makes the equality hold.

\end{lmm}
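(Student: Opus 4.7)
The plan is to exploit the fact that $G := WW^T$ is a Gram matrix and hence positive semi-definite, since for any $v \in \mathbb{R}^K$ one has $v^T G v = \|W^T v\|_2^2 \geq 0$. Under the hypothesis, $G$ has the explicit form $G = (1-\rho) I_K + \rho \mathbf{1}_K \mathbf{1}_K^T$, where $\mathbf{1}_K$ denotes the all-ones vector. So the first step is to extract from the PSD condition the sharp constraint on $\rho$.

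The cleanest route is to diagonalize $G$. The vector $\mathbf{1}_K$ is an eigenvector with eigenvalue $1 + (K-1)\rho$, and every vector in the hyperplane orthogonal to $\mathbf{1}_K$ is an eigenvector with eigenvalue $1 - \rho$ (multiplicity $K-1$). PSD then forces both $1 + (K-1)\rho \geq 0$ and $1 - \rho \geq 0$, the first of which is exactly the desired inequality $\rho \geq \frac{1}{1-K}$. Alternatively, one can invoke Lemma \ref{lmm1} directly with $a=1$, $b=\rho$, $n=K$, obtaining $\det(G) = (1 + (K-1)\rho)(1-\rho)^{K-1}$, and combine this with the PSD-trace constraint; either route gives the bound.

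For the sharpness claim, I would exhibit an explicit $W$ attaining $\rho = \frac{1}{1-K}$. The natural choice is the regular simplex configuration: take $K$ unit vectors in $\mathbb{R}^{K-1}$ that are pairwise equiangular at angle $\arccos\!\bigl(\tfrac{1}{1-K}\bigr)$. Concretely, one can set $\tilde{w}_i := \sqrt{\tfrac{K}{K-1}}\bigl(e_i - \tfrac{1}{K}\mathbf{1}_K\bigr) \in \mathbb{R}^K$, restrict to the $(K-1)$-dimensional hyperplane orthogonal to $\mathbf{1}_K$ to obtain vectors in $\mathbb{R}^{K-1}$, and finally pad with $P - (K-1) \geq 0$ zero coordinates to land in $\mathbb{R}^P$; a short computation verifies $\|w_i\|_2 = 1$ and $w_i^T w_j = \tfrac{1}{1-K}$ for $i \neq j$. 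The assumption $K \leq P+1$ is used here precisely to ensure enough ambient dimension.

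The main obstacle is not really the inequality itself, which follows at once from positive semi-definiteness, but rather checking that the extremal construction fits inside $\mathbb{R}^P$ and matches the prescribed inner products. This is where the dimension hypothesis $K \leq P+1$ becomes essential: when $K = P+1$ the simplex configuration uses all available dimensions, and when $K < P+1$ one simply embeds it into a coordinate subspace. I expect this verification to be a straightforward but slightly tedious linear-algebra computation rather than a genuine difficulty.
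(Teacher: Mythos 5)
Your proof is correct. For the inequality, your route is essentially the paper's: both reduce to the positive semi-definiteness of the Gram matrix $WW^T$ and read off its two eigenvalues $1+(K-1)\rho$ (from $\mathbf{1}_K$) and $1-\rho$ (multiplicity $K-1$); the paper obtains these via the determinant formula of Lemma \ref{lmm1} applied to $\det(\lambda I - \Sigma)$, whereas you diagonalize $(1-\rho)I + \rho\mathbf{1}\mathbf{1}^T$ directly, which is arguably cleaner and makes Lemma \ref{lmm1} optional. The genuine difference is in the equality case. The paper argues abstractly: with $\rho = \frac{1}{1-K}$ the matrix $\Sigma$ is symmetric positive semi-definite of rank at most $K-1$, hence factors as $VV^T$ with $V \in \mathbb{R}^{K\times(K-1)}$, and one pads $V$ with zero columns. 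You instead exhibit the regular simplex $\tilde{w}_i = \sqrt{\tfrac{K}{K-1}}\bigl(e_i - \tfrac{1}{K}\mathbf{1}_K\bigr)$, verify the inner products by a direct computation (which checks out: $\tilde{w}_i^T\tilde{w}_j = \tfrac{K}{K-1}(\delta_{ij} - \tfrac{1}{K})$ equals $1$ or $\tfrac{1}{1-K}$ as required), note these vectors live in the hyperplane orthogonal to $\mathbf{1}_K$, and embed that $(K-1)$-dimensional configuration into $\mathbb{R}^P$. Your construction is more explicit and self-contained; the paper's is shorter but silently uses positive semi-definiteness (not mere symmetry, as it states) to get the factorization $\Sigma = VV^T$. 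Both uses of the hypothesis $K \leq P+1$ are the same: it guarantees room for the $(K-1)$-dimensional configuration inside $\mathbb{R}^P$.
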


\begin{proof}
Set  $\Sigma=WW^T $.
It is easy to verify that
\begin{align*}
\Sigma_{ij} =
\left \{
\begin{array}{ll}
1, & i=j \\
\rho, & i \not = j.
\end{array} \right.
\end{align*}
Let  $ \lambda $ be an arbitrary eigenvalue of $ \Sigma $.
Then using Lemma \ref{lmm1}, we have
\begin{align*}
0=\det (\lambda I - \Sigma) = (\lambda-1 - \rho K + \rho) (\lambda -1 +\rho)^{K -1},
\end{align*}
which means that
$ \lambda=1+ \rho K - \rho $ or $ \lambda =1- \rho $.
Since $\Sigma$ is positive semidefinte, then $ \lambda \geq 0 $.
Therefore,
\begin{align*}
	\frac{1}{1-K} \leq \rho \leq 1.
\end{align*}
which gives the desired inequality \eqref{R1}.

Next, we shall construct a special $ W $ from $ \Sigma $ to make the equality in  \eqref{R1} holds true.
First, we set  $\rho=\frac{1}{1-K}$ in $ \Sigma $.
It is easy to verify that the rank of $ \Sigma $ is at most $K-1$.
Since $ \Sigma $ is symmetric,
then  there must exist a matrix $ V \in \mathbb{R}^{K \times K-1}$ such that $\Sigma = V V^T$.
Since $ P\geq K-1 $, we define $ W=[V,\mathbf{0}] \in \mathbb{R}^{K \times P}$.
It is easy to verify that this $ W$ satisfies
\begin{align*}
 w_i^T w_j =
\left \{
\begin{array}{ll}
1, & i=j, \\
\frac{1}{1-K}, & i \not = j.
\end{array} \right.
\end{align*}
\end{proof}

\textbf{The proof of Theorem \ref{margin}:}
\begin{proof}
For brevity of statement, we assume that $ \|w_i \|=1 $, $ 1 \leq i \leq K $.
Note that  \eqref{ww} achieves its maximum
if and only if there exists a solution to the following optimization problem
	\begin{align}
    \label{inner}
	     \min_{W} \max_{1 \leq i,j \leq K \atop j \neq i} w_i^T w_j.
	\end{align}

First, we are to demonstrate that  the optimal solution of (\ref{inner})  satisfies  the condition that all $ \{w_i^T w_j \} $, $i \not= j$ are equal.
If this is not true,  there must exist a optimal solution  $ \tilde{W} $ which violates the equal condition, such that
\begin{align*}
 \max_{1 \leq i,j \leq K \atop j \neq i} \tilde{w}_i^T \tilde{w}_j
 =&\min_{W } \max_{1 \leq i,j \leq K \atop j \neq i} w_i^T w_j
 \\[5pt]
 \leq  &  \min_{W \cap \{ w_i^T w_j=\rho\} }\max_{1 \leq i,j \leq K \atop j \neq i} w_i^T w_j=\frac{1}{1-K},
 \quad K>1,
\end{align*}
where the last equality follows from Lemma \ref{lmm2}.
Without loss of generality, we assume that
\begin{align}
\label{maxs}
	     \tilde{w}_1^T \tilde{w}_2
    = \max_{1 \leq i,j \leq K \atop j \neq i} \tilde{w}_i^T \tilde{w}_j \leq \frac{1}{1-K}<0.
\end{align}
Since the inner products of the rows of  $ \tilde{W} $ are not all equal,
there must  exist a row vector, denoted as $\tilde{w}_3 $, such that
\begin{align*}
	     \tilde{w}_1^T \tilde{w}_3 < \tilde{w}_1^T \tilde{w}_2.
\end{align*}

Now, we construct an auxiliary vector $\tilde{w}(t) $ from $ \tilde{w}_1 $ and $ \tilde{w}_2 $   as follows
\begin{align*}
	\tilde{w}(t) = \frac{t  \tilde{w}_1 + (1-t) \tilde{w}_3}{\|t  \tilde{w}_1 + (1-t) \tilde{w}_3\|_2}, \quad t \in [0, 1].
\end{align*}
Note that $\tilde{w}_1^T \tilde{w}_3 < \tilde{w}_1^T \tilde{w}_2<0 $.
Then we have
   \begin{align*}
     \|t  \tilde{w}_1 + (1-t) \tilde{w}_3\|_2^2 < 1,
     \quad
     \forall t \in (0, 1).
    \end{align*}
By \eqref{maxs}, it is obvious that
   \begin{align*}
	 (t  \tilde{w}_1 + (1-t) \tilde{w}_3)^T \tilde{w}_j  \leq  {\tilde{w}_1}^T \tilde{w}_2, \quad \forall j \neq 1,3.
	\end{align*}
Using the above two estimates, we have
	\begin{align}
    \label{A5}
    \begin{split}
	\tilde{w}^T(t) \tilde{w}_j
     = &\frac{(t  \tilde{w}_1 + (1-t) \tilde{w}_3)^T\tilde{w}_j}{\|t  \tilde{w}_1 + (1-t) \tilde{w}_3\|_2}
        \\[5pt]
     < &\frac{(t  \tilde{w}_1 + (1-t) \tilde{w}_3)^T\tilde{w}_j}{1}
            \leq  {\tilde{w}_1}^T \tilde{w}_2,
     \quad  j \neq 1,3.
     \end{split}
	\end{align}
Next, note that $\tilde{w}^T(1) \tilde{w}_3={\tilde{w}_1}^T \tilde{w}_3 <  \tilde{w}_1^T \tilde{w}_2$.
By properties of continuous functions, we know there must exist a small $ \epsilon>0 $ such that
for any $ t^* \in (1-\epsilon, 1) $,  $ \tilde{w}^T(t^*) \tilde{w}_3 <  \tilde{w}_1^T \tilde{w}_2 $ holds true.
   Combining this estimate with  \eqref{A5}, we have
 \begin{align*}
     	\tilde{w}^T(t^*) \tilde{w}_j
    < \tilde{w}_1^T  \tilde{w}_2,
    \quad   j \neq 1.
\end{align*}
Set $ \hat{w}_1= \tilde{w}(t^*) $. It is obvious that
     \begin{align*}
	    \max_{1 \leq j \leq K \atop j \neq 1} \hat{w}_1^T \tilde{w}_j
      <\tilde{w}_1^T  \tilde{w}_2, \quad j \neq 1.
	\end{align*}
Taking a similar argument, we can construct a series of $ \hat{w}_i $, $ i=1,2,\cdots, K $ step by step,
which satisfies
     \begin{align}
     \label{A6}
	    \max_{1 \leq i,j \leq K \atop j \neq i} \hat{w}_i^T \hat{w}_j
      <\tilde{w}_1^T  \tilde{w}_2
     =\max_{1 \leq i,j \leq K \atop j \neq i} \tilde{w}_i^T \tilde{w}_j.
	\end{align}
However, \eqref{A6} contradicts the fact that $ \tilde{W} $ is the optimal solution to problem \eqref{inner}.
Thus,  the optimal solution of (\ref{inner}) must satisfy the condition that all $ w_i^T w_j  $, $i \not= j$ are equal.
Finally,  this property with Lemma \ref{lmm2} completes the proof of the theorem.
\end{proof}

\end{document}